\documentclass{article}

\usepackage[preprint]{neurips_2026}

\usepackage[utf8]{inputenc} 
\usepackage[T1]{fontenc}    
\usepackage{hyperref}       
\usepackage{url}            
\usepackage{booktabs}       
\usepackage{amsfonts}       
\usepackage{nicefrac}       
\usepackage{microtype}      
\usepackage{xcolor}         
\usepackage[utf8]{inputenc}
\usepackage[T1]{fontenc}
\usepackage{hyperref}
\usepackage{url}
\usepackage{booktabs}
\usepackage{amsfonts}
\usepackage{amsmath}
\usepackage{amssymb}
\usepackage{amsthm}
\usepackage{mathtools}
\usepackage{nicefrac}
\usepackage{microtype}
\usepackage{xcolor}
\usepackage{graphicx}
\usepackage{caption}
\usepackage{subcaption}
\usepackage{enumitem}
\usepackage{bbm}
\usepackage{stmaryrd}
\usepackage{stfloats}
\usepackage{wrapfig}
\newtheorem{theorem}{Theorem}[section]
\newtheorem{proposition}[theorem]{Proposition}

\newtheorem{definition}[theorem]{Definition}

\DeclareMathOperator{\rk}{rk}

\usepackage{hyperref}
\usepackage{stfloats}

\title{TopoU-Net: A U-Net Architecture for Topological Domains}

\author{%
  Gaurav Gaurav\\
  University of South Florida\\
  \And
  Ibrahem ALJabea\\
  Louisiana State University\\
  \And
  Yaroslav Zakomornyy\\
  University of South Florida\\
  \And
  Eric Frank\\
  Vinci4D\\
  \And
  Mohamed Elhamdadi\\
  University of South Florida\\
  \And
  Theodore Papamarkou\\
  PolyShape\\
  NTUA
  \And
  Mustafa Hajij\\
  USFCA
}

\begin{document}

\maketitle

\begin{abstract}
Many modern datasets mix points, edges, regions, groups, objects, events, hyperedges, and relations. Yet neural architectures often force such data into grids, graphs, or sequences, obscuring higher-order structure and making encoder-decoder designs domain-specific. We view U-Net not as a grid-specific architecture, but as a hierarchical encoder-decoder principle: representation spaces, transport maps between levels, and skip connections between matched levels. Combinatorial complexes naturally supply these ingredients through cells, incidences, and ranks. We introduce TopoU-Net, a rank-path U-Net for topological domains. Given a path from an input rank to a bottleneck rank and back, the encoder lifts cochains upward along incidence maps, the decoder transports them downward, and skip connections merge features at matched ranks. Rank replaces spatial scale: choosing paths through nodes, edges, faces, hyperedges, or global cells becomes the central architectural decision. A key quantity is the bottleneck support ratio, the number of cells at the bottleneck relative to the number of cells at the input rank. This ratio is fixed by the complex and chosen path rather than by arbitrary pooling, and it clarifies when skip connections are optional, useful, or structurally important. Across node classification, graph classification, hypergraph node classification, mesh classification, and image reconstruction, TopoU-Net provides a reusable encoder-decoder template for higher-order structured data. Among the evaluated baselines, it achieves the strongest mean accuracy on six of eight node-classification datasets and four of five hypergraph datasets, with the largest gains on heterophilic graphs. Ablations show that removing skip connections is most damaging under severe bottleneck compression.
\end{abstract}

\section{Introduction}

Deep learning architectures are most effective when their inductive bias
matches the structure of the data. CNNs assume signals live on grids, GNNs
assume pairwise relational structure, and Transformers assume collections of
tokens with learned interactions. Many modern datasets, however, are not
naturally described by a single one of these abstractions. They contain
features attached to points, relations attached to edges, regions attached to
faces, groups attached to hyperedges, and sometimes global or higher-order
objects. Forcing such data into only a grid, graph, or sequence can discard
the very structure the model should exploit.

This raises a basic architectural question: how should information move
between the different supports on which data lives? In images, U-Net
architectures~\cite{ronneberger2015unet} answer this question through a
spatial hierarchy: an encoder compresses local features into coarser context,
a decoder returns this context to the output resolution, and skip connections
restore fine-scale information lost through compression. The same principle,
however, is not inherently grid-specific. What it requires is a hierarchy of
representation spaces, transport maps between them, and skip connections
between matched levels.

In non-Euclidean domains, the central question is therefore what should play
the role of scale. Graph U-Nets \cite{gao2019graph} answer this through learned pooling and
unpooling over nodes, but they remain tied to pairwise node--edge structure.
Many scientific and relational domains contain higher-order objects---faces
in meshes, hyperedges, triangles and cliques in networks, or patches in
images---where hierarchy is better understood as traversal across
topological ranks rather than merely reduction in node count. This is the
view taken in this paper: combinatorial complexes supply the representation
spaces through cells, the transport maps through incidences, and the hierarchy
through rank.

This paper introduces \emph{TopoU-Net}, a U-Net architecture on combinatorial complexes~\cite{hajij2023tdl}. 
A combinatorial complex \(\mathcal{X}\) organizes the domain into ranked cells (nodes, edges, triangles, hyperedges, or higher-order cells). 
TopoU-Net selects an encoder rank path \(\mathcal{S}=(s_0<\cdots<s_L)\) and transports features upward along incidence maps; the decoder reverses the path, while skip connections merge encoder and decoder features at matched ranks. 
Thus scale is replaced by rank traversal, turning the architecture into a U-Net over the cell structure of the domain rather than a Euclidean grid.

The rank path is the central design choice: edges capture pairwise relations, triangles or hyperedges incorporate higher-order interactions, and a global cell yields strong compression. 
This makes compression explicit via the \emph{bottleneck support ratio} \(\rho_{\mathrm{bot}}=n_{s_L}/n_{s_0}\), fixed by the complex and chosen path. 
When \(\rho_{\mathrm{bot}}\ll 1\),
 reconstruction from the bottleneck is structurally constrained; skip connections provide a principled way to bypass severe compression, offering a concrete design principle for topological encoder--decoder networks.

TopoU-Net gives a unified encoder--decoder view across domains. Images use
pixels, adjacencies, and patches; graphs use nodes, edges, triangles, and
global components; hypergraphs use node--hyperedge incidence; and meshes use
vertices, edges, and faces. In all cases, the architecture is unchanged: only
the complex construction and rank path vary. We evaluate TopoU-Net on node classification, graph classification,
hypergraph node classification, mesh classification, and image reconstruction.

Across the evaluated baselines, TopoU-Net achieves the strongest mean accuracy
on six of eight node-classification datasets and four of five hypergraph
datasets, with the largest gains on heterophilic graphs where edge-local
aggregation is unreliable. Reconstruction experiments show that
incidence-based rank transport can be parameter-efficient on grids, while
skip-connection ablations support the predicted link between bottleneck
support ratio and skip importance.

\textbf{Contributions.}
We make four contributions. First, we formulate U-Net-style encoder--decoders
on combinatorial complexes using incidence-based transport across ranks.
Second, we identify the bottleneck support ratio as a structural measure of
rank-path compression, explaining when skips or wider bottlenecks are needed.
Third, we give a canonical TopoU-Net with same-rank refinement and matched-rank
skip merges. Fourth, we evaluate it across graph, hypergraph, mesh, and image
domains, with ablations isolating the roles of rank paths and skips.

\section{Related work}
\label{sec:related-work}

\textbf{U-Net architectures.}
U-Net was introduced as an encoder-decoder architecture with skip connections
for biomedical image segmentation~\cite{ronneberger2015unet}. Its main
architectural bias is to combine coarse contextual features with fine-scale
features preserved through matched skip connections. Many variants refine this
principle. Attention U-Net gates skip features using task-dependent attention
signals~\cite{oktay2018attention}, U-Net++ introduces nested skip pathways to
improve feature fusion~\cite{zhou2018unetplusplus}, ResUNet adds residual
blocks for deeper encoder-decoder models~\cite{zhang2018resunet}, and UNet3+
uses full-scale skip aggregation across resolutions~\cite{huang2020unet3plus}.
These architectures are defined on Euclidean grids, where locality and scale
are provided by convolution, pooling, and upsampling. TopoU-Net keeps the
encoder-decoder and skip-connection principle, but replaces Euclidean scale by
rank structure in a combinatorial complex.

\textbf{Graph neural networks and graph hierarchy.}
Graph neural networks learn representations by aggregating information over
pairwise graph neighborhoods. Representative architectures include
GCN~\cite{kipf2017semi}, GraphSAGE~\cite{hamilton2017inductive},
GAT~\cite{velickovic2018graph}, and GIN~\cite{xu2019powerful}. Extensions such
as MixHop~\cite{abu2019mixhop} and H2GCN~\cite{zhu2020beyond} incorporate
multi-hop or heterophily-aware aggregation. Hierarchical graph models introduce
coarser graph representations through learned assignment or node selection,
including DiffPool~\cite{ying2018diffpool}, Graph U-Net~\cite{gao2019graph},
top-$k$ pooling~\cite{cangea2018towards}, and MinCutPool~\cite{bianchi2020spectral}.
These methods define hierarchy inside a pairwise graph domain. By contrast,
TopoU-Net defines hierarchy by transporting features across ranked cells, such
as nodes, edges, triangles, hyperedges, or higher-order cells.

\textbf{Topological deep learning.}
Topological deep learning (TDL) extends neural architectures from graphs to richer
domains such as simplicial complexes, cell complexes, hypergraphs, and
combinatorial complexes~\cite{hajij2023tdl,hajij2024topox,zia2024tdlreview}.
Hodge-theoretic methods generalize graph Laplacians to signals on higher-order
cells~\cite{lim2020hodge,schaub2021signal,roddenberry2019hodgenet,barbarossa2020topological},
while message-passing and attention architectures aggregate over upper, lower,
and boundary neighborhoods~\cite{hajij2020cell,bodnar2022b,giusti2022simplicial,giusti2023cell,battiloro2023generalized}.
These models provide higher-order analogues of graph neural networks, but they
are usually not organized as U-shaped encoder-decoder architectures with
matched cross-rank skip connections.

\textbf{Topological pooling and our distinction.}
Pooling and coarsening on higher-order domains are less standardized than graph
pooling. Existing simplicial pooling methods adapt graph-pooling ideas to
simplicial complexes~\cite{cinque2023pooling}, and skip connections have been
studied for higher-order networks~\cite{hajij2022highskip}. TopoU-Net takes a
different approach. Instead of learning a new coarsened complex at each layer,
TopoU-Net uses the existing rank structure of a combinatorial complex as the
hierarchy. Incidence maps transport features across ranks, within-rank
operators refine features at fixed support, and skip connections merge encoder
and decoder states at matched ranks. This uses the compression induced by the
rank path and links architectural choices to the cell counts of the
underlying domain.

\section{TopoU-Net: architecture and compression}
\label{sec:topounet}

TopoU-Net is an encoder--decoder architecture on combinatorial complexes~(CCs). 
Ranked cells supply the hierarchy, incidence matrices provide transport between ranks, and skip connections merge encoder and decoder features at matched ranks. 
We first fix notation for cells, cochains, and incidence-based transport; the architecture is defined in Section~\ref{subsec:topounet-definition}.

\subsection{Combinatorial complexes and cochain spaces}
\label{subsec:cc-preliminaries}

We use CC as the underlying topological domain. They allow
signals to be placed on vertices, edges, faces, hyperedges, or other higher-order cells.

\begin{definition}
A \emph{combinatorial complex (CC)} is a triple
$(S,\mathcal X,\rk)$, where $S$ is a finite nonempty vertex set,
$\mathcal X \subseteq 2^S\setminus\{\emptyset\}$ is a finite collection of
nonempty subsets containing every singleton $\{v\}$ for $v\in S$, and
$\rk:\mathcal X\to \mathbb Z_{\geq 0}$ is a rank function satisfying
$\rk(\{v\})=0$ for all $v\in S$, and
$x\subseteq y \implies \rk(x)\leq \rk(y)$.
\end{definition}

Elements of $\mathcal X$ are called \emph{cells}. A cell $x\in\mathcal X$
with $\rk(x)=r$ is called an $r$-cell. We write
$\mathcal X^r := \{x\in\mathcal X:\rk(x)=r\}$ and
$n_r := |\mathcal X^r|$, and define the active ranks by
$\mathcal R(\mathcal X) := \{r \in \mathbb Z_{\geq 0}:\mathcal X^r\neq\emptyset\}$.
The dimension of the complex $\mathcal X$ is
$\dim(\mathcal X):=\max \{\rk(x)\}_{ x\in\mathcal X}$.
Unlike simplicial complexes, a CC does not need to contain all
subsets of a cell, and the active ranks do not need to be consecutive. This is useful
for learning because the model can use only the ranks that are present or
useful for a given task.

\begin{wrapfigure}[8]{r}{0.26\textwidth}
\vspace{-\baselineskip}
\centering
\includegraphics[width=0.28\textwidth]{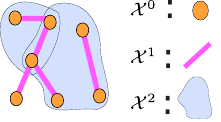}
\caption{A combinatorial complex with cells at ranks $0,1,$ and $2$.}
\label{fig:cc}
\end{wrapfigure}

For an active rank $r$, a feature signal on the $r$-cells is represented as
an $r$-cochain. We identify the corresponding cochain space with
$C^r(\mathcal X;\mathbb R^{d_r}) := \mathbb R^{n_r\times d_r}$,
where rows are indexed by cells in $\mathcal X^r$, and $d_r$ is the feature
dimension at rank $r$. Thus
$H^{(r)}\in C^r(\mathcal X;\mathbb R^{d_r})$ assigns one feature vector to
each $r$-cell.

\subsection{Incidence operators and rank transport}
\label{subsec:rank-transport}

Fix an ordering of the cells at each active rank. For two active ranks
$r<r'$, the incidence matrix
$B_{r,r'}\in\{0,1\}^{n_r\times n_{r'}}$ is defined by
$[B_{r,r'}]_{ij}=1$ if and only if $x_i^r \subset x_j^{r'}$, where
$x_i^r\in\mathcal X^r$ and $x_j^{r'}\in\mathcal X^{r'}$. Multiplying
by $B_{r,r'}^\top$ moves features upward from rank $r$ to rank $r'$, while
multiplying by $B_{r,r'}$ moves features downward from rank $r'$ to rank
$r$: $B_{r,r'}^\top H^{(r)}\in \mathbb R^{n_{r'}\times d_r}$ and
$B_{r,r'} H^{(r')}\in \mathbb R^{n_r\times d_{r'}}$.
Because $B_{r,r'}$ is defined directly from inclusion, transport can be
defined between any two active ranks $r<r'$.

Incidence also induces within-rank adjacency. For example,
$A_{r\mid r'} := B_{r,r'}B_{r,r'}^\top\in \mathbb R^{n_r\times n_r}$
connects two $r$-cells when they are incident to a common $r'$-cell. In
TopoU-Net, we distinguish these two roles; incidence matrices transport
features across ranks, while adjacency-type operators such as $A_{r\mid r'}$
may be used to refine features within a fixed rank.

\begin{definition}[Rank transport]
\label{def:rank-transport}
Let $r,r'\in\mathcal R(\mathcal X)$ with $r<r'$. An \emph{upward rank
transport} is a map
$T^{\uparrow}_{r\to r'}:
C^r(\mathcal X;\mathbb R^{d_r})
\to
C^{r'}(\mathcal X;\mathbb R^{d_{r'}})$
constructed from the incidence relation between $r$-cells and $r'$-cells.
The corresponding \emph{downward rank transport} is a map
$T^{\downarrow}_{r'\to r}:
C^{r'}(\mathcal X;\mathbb R^{d_{r'}})
\to
C^r(\mathcal X;\mathbb R^{d_r})$
constructed by reversing the incidence direction.
\end{definition}

The simplest transport pair is incidence convolution:
$T^{\uparrow}_{r\to r'}(H)
=
\phi\!(B_{r,r'}^\top H W^{\uparrow}_{r,r'})$, with
$W^{\uparrow}_{r,r'}\in\mathbb R^{d_r\times d_{r'}}$, and
$T^{\downarrow}_{r'\to r}(G)
=
\phi\!(B_{r,r'} G W^{\downarrow}_{r',r})$, with
$W^{\downarrow}_{r',r}\in\mathbb R^{d_{r'}\times d_r}$.
Here $\phi$ is a pointwise nonlinearity. In implementations, $B_{r,r'}$
may be replaced by a fixed degree-normalized incidence matrix. This changes
the weighting of incident cells, but not the type of transport map.

Throughout the paper, an \emph{encoder rank path} is an increasing sequence of
active ranks
$\mathcal S=(s_0<s_1<\ldots<s_L)\subseteq\mathcal R(\mathcal X)$.
The bottleneck rank is $s_L$. The corresponding U-shaped traversal is
$\mathcal P=(s_0,s_1,\ldots,s_L,s_{L-1},\ldots,s_0)$.
$\mathcal S$ determines the encoder
compression structure, while $\mathcal P$ describes the full
encoder--decoder path.

\begin{figure}[t]
    \centering
    \includegraphics[width=\textwidth]{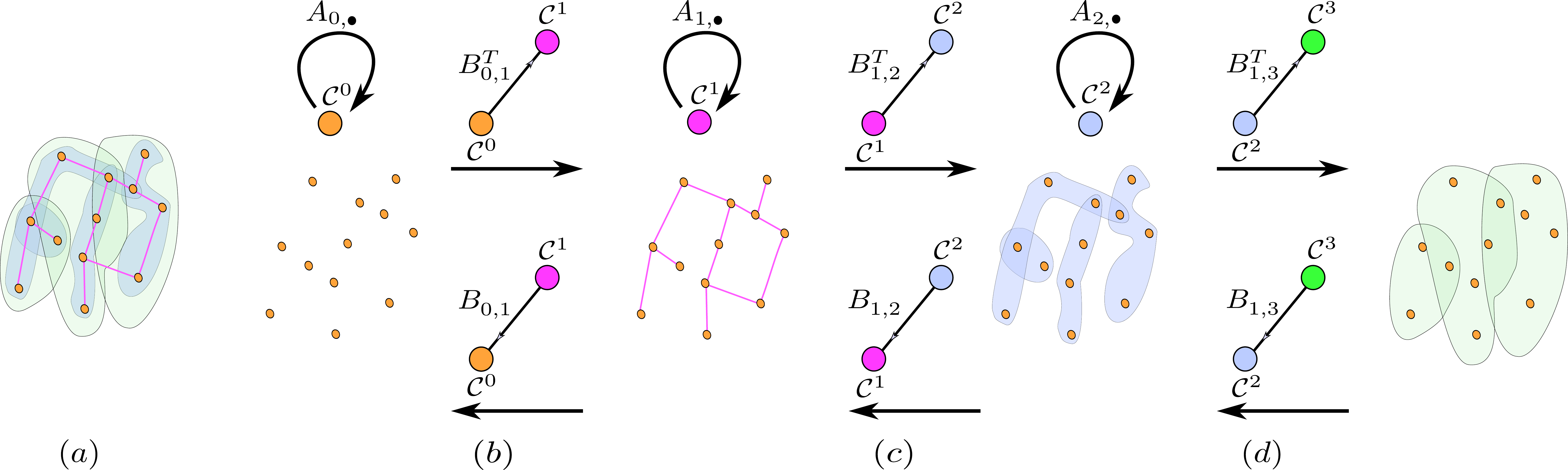}
    \caption{Rank-induced hierarchy.
    Incidence maps transport features from lower to higher ranks in the
    encoder and from higher to lower ranks in the decoder. Within-rank
    refinements act on cochains at a fixed rank, while skip connections merge
    encoder and decoder features at matched ranks.}
\label{fig:topounet_motivation}
\end{figure}

\subsection{TopoU-Net definition}
\label{subsec:topounet-definition}

Let $\mathcal S=(s_0<s_1<\ldots<s_L)$ be an encoder rank path in
$\mathcal R(\mathcal X)$. For readability, write $d_i$ for the feature
dimension used at rank $s_i$. For each $i=0,\ldots,L-1$, let
$T^{\uparrow}_{s_i\to s_{i+1}}:
C^{s_i}(\mathcal X;\mathbb R^{d_i})\to
C^{s_{i+1}}(\mathcal X;\mathbb R^{d_{i+1}})$ be an upward transport and let
$T^{\downarrow}_{s_{i+1}\to s_i}:
C^{s_{i+1}}(\mathcal X;\mathbb R^{d_{i+1}})\to
C^{s_i}(\mathcal X;\mathbb R^{d_i})$ be the corresponding downward transport.
Let $\Phi_{s_{i+1}}$ and $\Psi_{s_i}$ be within-rank refinement maps at ranks
$s_{i+1}$ and $s_i$, respectively. Let $\Omega_{s_L}$ be a bottleneck map at
rank $s_L$, and let
$M_{s_i}:C^{s_i}(\mathcal X;\mathbb R^{d_i})\times
C^{s_i}(\mathcal X;\mathbb R^{d_i})\to
C^{s_i}(\mathcal X;\mathbb R^{d_i})$ be a same-rank merge map.

\begin{definition}[TopoU-Net]
\label{def:TopoU-Net}
Given an input cochain $H^{(s_0)}\in C^{s_0}(\mathcal X;\mathbb R^{d_0})$,
a TopoU-Net first sets $E_{s_0}=H^{(s_0)}$ and computes encoder states
\begin{align}
E_{s_{i+1}}
&=
\Phi_{s_{i+1}}\left(
T^{\uparrow}_{s_i\to s_{i+1}}(E_{s_i})
\right),
~i=0,\ldots,L-1.
\label{eq:topounet-encoder}
\end{align}
The decoder is initialized by $D_{s_L}=\Omega_{s_L}(E_{s_L})$ and then computes
\begin{align}
\widetilde D_{s_i}
&=
\Psi_{s_i}\!\left(
T^{\downarrow}_{s_{i+1}\to s_i}(D_{s_{i+1}})
\right),
~
D_{s_i}
=
M_{s_i}(E_{s_i},\widetilde D_{s_i}),
~i=L-1,\ldots,0.
\label{eq:topounet-decoder}
\end{align}
The output at the input rank is $D_{s_0}$, optionally followed by a
task-specific prediction head.
\end{definition}

The definition separates three operations. Transport changes the rank and is
implemented from incidence structure. Refinement acts within a fixed rank.
Merge maps combine encoder and decoder features only after both are indexed by
the same cells. This is the topological analogue of the alignment condition in
classical U-Nets.

\begin{proposition}[Structural compatibility]
\label{thm:compatibility}
For every $i=0,\ldots,L$, the encoder state $E_{s_i}$ lies in
$C^{s_i}(\mathcal X;\mathbb R^{d_i})$. For every $i=0,\ldots,L-1$, the decoder
state $\widetilde D_{s_i}$ and the merged state $D_{s_i}$ also lie in
$C^{s_i}(\mathcal X;\mathbb R^{d_i})$. Hence every skip merge in
Definition~\ref{def:TopoU-Net} is type-compatible by construction.
\end{proposition}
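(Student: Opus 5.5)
The argument is a type-checking induction that follows the order of computation in Definition~\ref{def:TopoU-Net}: first the encoder by increasing $i$, then the decoder by decreasing $i$. The only inputs are the domain and codomain declarations fixed just before the definition. Each $T^{\uparrow}_{s_i\to s_{i+1}}$ maps $C^{s_i}(\mathcal X;\mathbb R^{d_i})$ to $C^{s_{i+1}}(\mathcal X;\mathbb R^{d_{i+1}})$, and each $T^{\downarrow}_{s_{i+1}\to s_i}$ maps in the reverse direction. The refinements $\Phi_{s_{i+1}}$ and $\Psi_{s_i}$ are within-rank maps, so they send $C^{s}(\mathcal X;\mathbb R^{d})$ to itself at the relevant rank and width. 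The bottleneck map $\Omega_{s_L}$ is a map at rank $s_L$, which I read as an endomorphism of $C^{s_L}(\mathcal X;\mathbb R^{d_L})$. Finally, $M_{s_i}$ maps $C^{s_i}\times C^{s_i}$ to $C^{s_i}$, all at width $d_i$.

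\emph{Encoder.} I induct on $i$.

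\begin{itemize}
\item The base case holds because $E_{s_0}=H^{(s_0)}\in C^{s_0}(\mathcal X;\mathbb R^{d_0})$ by hypothesis.
\item For the step, assume $E_{s_i}\in C^{s_i}(\mathcal X;\mathbb R^{d_i})$. Then $T^{\uparrow}_{s_i\to s_{i+1}}(E_{s_i})\in C^{s_{i+1}}(\mathcal X;\mathbb R^{d_{i+1}})$.
\item Applying $\Phi_{s_{i+1}}$ keeps the result in that space, so $E_{s_{i+1}}$ lies there.
\end{itemize}

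This covers all $i=0,\ldots,L$. For concreteness, in the incidence-convolution instance the check is just matrix shapes: $B_{s_i,s_{i+1}}^\top\in\mathbb R^{n_{s_{i+1}}\times n_{s_i}}$ times $E_{s_i}\in\mathbb R^{n_{s_i}\times d_i}$ times $W^{\uparrow}\in\mathbb R^{d_i\times d_{i+1}}$ gives an $n_{s_{i+1}}\times d_{i+1}$ array. A pointwise $\phi$ preserves that shape.

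\emph{Decoder.} I run a downward induction proving $D_{s_{j}}\in C^{s_{j}}(\mathcal X;\mathbb R^{d_{j}})$ for $j=L,L-1,\ldots,0$.

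\begin{itemize}
\item The base case $j=L$ follows from $D_{s_L}=\Omega_{s_L}(E_{s_L})$, using the encoder result for $E_{s_L}$.
\item For the step, given $D_{s_{i+1}}\in C^{s_{i+1}}(\mathcal X;\mathbb R^{d_{i+1}})$, the transport $T^{\downarrow}_{s_{i+1}\to s_i}$ lands in $C^{s_i}(\mathcal X;\mathbb R^{d_i})$. Applying $\Psi_{s_i}$ preserves this space, so $\widetilde D_{s_i}\in C^{s_i}(\mathcal X;\mathbb R^{d_i})$.
\item The pair $(E_{s_i},\widetilde D_{s_i})$ then lies in the domain of $M_{s_i}$, by the encoder result for $E_{s_i}$. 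Hence $D_{s_i}=M_{s_i}(E_{s_i},\widetilde D_{s_i})$ is well defined and lies in $C^{s_i}(\mathcal X;\mathbb R^{d_i})$. This closes the induction.
\end{itemize}

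The final claim follows at once. Every skip merge is an application of $M_{s_i}$ to $E_{s_i}$ and $\widetilde D_{s_i}$. Both have rows indexed by the same ordered set $\mathcal X^{s_i}$ and the same width $d_i$, so the merge is type-compatible.

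There is no real obstacle here; the one point needing care is to make explicit the typing conventions that the setup leaves implicit. Specifically, $\Phi$, $\Psi$ and $\Omega$ must preserve both rank and feature width. The decoder's dimension $d_i$ at rank $s_i$ must also be the same $d_i$ used by the encoder, which the statement of the downward transports guarantees. I would state these conventions at the start of the proof, noting that they are exactly how the maps were declared before Definition~\ref{def:TopoU-Net}.
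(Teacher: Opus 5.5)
Your proof is correct and matches the paper's argument: an upward induction on the encoder using the declared codomain of $T^{\uparrow}_{s_i\to s_{i+1}}$ and the fact that $\Phi_{s_{i+1}}$ preserves the cochain space, then a downward induction on the decoder starting from $D_{s_L}=\Omega_{s_L}(E_{s_L})$ and passing through $T^{\downarrow}_{s_{i+1}\to s_i}$, $\Psi_{s_i}$, and $M_{s_i}$. You also state the typing conventions explicitly (that $\Phi$, $\Psi$, and $\Omega_{s_L}$ preserve both rank and feature width), which the paper's proof relies on without saying so.
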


We next record the equivariance property used throughout the paper. Let
$P_r$ denotes a permutation matrix reordering the cells of rank $r$. Under a
simultaneous reindexing of cells, cochains transform as
$H^{(r)}\mapsto P_rH^{(r)}$, and incidence matrices transform as
$B_{r,r'}\mapsto P_rB_{r,r'}P_{r'}^\top$. A transport family is
reindexing-equivariant if, under this transformation,
$T^{\uparrow}_{r\to r'}(P_rH)=P_{r'}T^{\uparrow}_{r\to r'}(H)$ and
$T^{\downarrow}_{r'\to r}(P_{r'}G)=P_rT^{\downarrow}_{r'\to r}(G)$.
The incidence-convolution transports in Section~\ref{subsec:rank-transport}
satisfy this condition because they use shared feature maps and incidence
matrices only through multiplication by $B_{r,r'}$ or $B_{r,r'}^\top$.

\begin{proposition}[Equivariance]
\label{thm:equivariance}
Assume that all transports in Definition~\ref{def:TopoU-Net} are
reindexing-equivariant, and that the refinement maps $\Phi_{s_i}$, $\Psi_{s_i}$,
the bottleneck map $\Omega_{s_L}$, and the merge maps $M_{s_i}$ are equivariant
to permutations of cells at their respective ranks. Then the TopoU-Net map
$H^{(s_0)}\mapsto D_{s_0}$ is equivariant to joint reindexing of the cells
along $\mathcal S$; if the input is replaced by $P_{s_0}H^{(s_0)}$ and all
incidence matrices are reindexed consistently, then the output is replaced by
$P_{s_0}D_{s_0}$.
\end{proposition}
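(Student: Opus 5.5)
The argument is a two-stage induction along the U-shaped traversal $\mathcal P$: one pass up the encoder path $\mathcal S$, then one pass back down the decoder. Fix permutations $P_{s_0},\dots,P_{s_L}$ of the cells at the ranks of $\mathcal S$, and reindex every incidence matrix $B_{s_i,s_{i+1}}$ to $P_{s_i}B_{s_i,s_{i+1}}P_{s_{i+1}}^\top$. Write primes for the states computed from the input $P_{s_0}H^{(s_0)}$ with these reindexed incidences. The claim to prove is $D'_{s_0}=P_{s_0}D_{s_0}$. To reach it, I will show two things: $E'_{s_i}=P_{s_i}E_{s_i}$ for all $i=0,\dots,L$, and $D'_{s_i}=P_{s_i}D_{s_i}$ for $i=L,L-1,\dots,0$. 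Proposition~\ref{thm:compatibility} guarantees that every state is a cochain at the stated rank. Hence applying $P_{s_i}$ to each state is well defined, and each equivariance hypothesis can be applied to arguments of the right type.

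\emph{Encoder induction.} The base case is immediate, since $E'_{s_0}=P_{s_0}H^{(s_0)}=P_{s_0}E_{s_0}$. For the step, assume $E'_{s_i}=P_{s_i}E_{s_i}$. By the reindexing-equivariance of $T^{\uparrow}_{s_i\to s_{i+1}}$, understood with the reindexed incidence matrix, we get
\[
T^{\uparrow}_{s_i\to s_{i+1}}(P_{s_i}E_{s_i})=P_{s_{i+1}}T^{\uparrow}_{s_i\to s_{i+1}}(E_{s_i}).
\]
Permutation-equivariance of $\Phi_{s_{i+1}}$ then gives $E'_{s_{i+1}}=P_{s_{i+1}}E_{s_{i+1}}$. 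At the bottleneck, equivariance of $\Omega_{s_L}$ yields $D'_{s_L}=P_{s_L}D_{s_L}$, which is the base case for the decoder.

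\emph{Decoder induction.} Assume $D'_{s_{i+1}}=P_{s_{i+1}}D_{s_{i+1}}$. Downward transport equivariance followed by equivariance of $\Psi_{s_i}$ gives $\widetilde D'_{s_i}=P_{s_i}\widetilde D_{s_i}$. The merge is the only place where two induction hypotheses meet. Equivariance of $M_{s_i}$ is read as joint equivariance, $M_{s_i}(PX,PY)=P\,M_{s_i}(X,Y)$, applied with the same $P=P_{s_i}$ to both arguments. The skip input $E'_{s_i}=P_{s_i}E_{s_i}$ comes from the encoder induction, and the decoder input from the line above, so together they give $D'_{s_i}=P_{s_i}D_{s_i}$. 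Taking $i=0$ proves the proposition.

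\emph{Main obstacle: consistency of the permutations.} The step needing care is that the encoder and the decoder must use the same permutation $P_{s_i}$ at each matched rank. Otherwise the merge would combine misaligned cochains and joint equivariance of $M_{s_i}$ could not be applied. This is exactly why the statement asks for joint reindexing along $\mathcal S$ with consistently reindexed incidence matrices. The encoder transport $T^{\uparrow}_{s_i\to s_{i+1}}$ and the decoder transport $T^{\downarrow}_{s_{i+1}\to s_i}$ both use the same reindexed $B_{s_i,s_{i+1}}$, so each produces output permuted by the same matrices. I will make the meaning of "equivariant" explicit in each case: for the transports, it holds jointly in the input cochain and the incidence matrix; for refinement operators such as $A_{r\mid r'}$, it means conjugation by $P_r$.
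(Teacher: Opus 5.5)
Your proposal is correct and follows the paper's proof essentially verbatim: an induction up the encoder giving $E'_{s_i}=P_{s_i}E_{s_i}$, then an induction down the decoder giving $D'_{s_i}=P_{s_i}D_{s_i}$, using equivariance of the transports, the refinement maps, $\Omega_{s_L}$, and the merge maps. Your extra remarks — that both merge arguments carry the same $P_{s_i}$, and that Proposition~\ref{thm:compatibility} guarantees type-compatibility — only make explicit what the paper's proof leaves implicit.
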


\subsection{Bottleneck support ratios and skip connections}
\label{subsec:bottleneck-support}

The encoder rank path $\mathcal S=(s_0<s_1<\ldots<s_L)$ determines the support
size at each level of the architecture. Unlike classical U-Nets, where pooling
ratios are design choices, these support sizes are fixed by the number of cells
at the selected ranks. For each step in the encoder, define the support ratio
$\rho_i:=n_{s_{i+1}}/n_{s_i}$, and define the bottleneck support ratio
$\rho_{\mathrm{bot}}:=n_{s_L}/n_{s_0}$. A step with $\rho_i<1$ compresses the
number of cells, while a step with $\rho_i>1$ expands it.

\begin{proposition}[Bottleneck support and capacity]
\label{prop:compression-hierarchy}
Let $\mathcal X$ be a CC and let
$\mathcal S=(s_0<s_1<\ldots<s_L)$ be an encoder rank path. The support ratios
$\rho_i=n_{s_{i+1}}/n_{s_i}$ and $\rho_{\mathrm{bot}}=n_{s_L}/n_{s_0}$ are
determined by the cell counts of $\mathcal X$ and the chosen rank path. Moreover, consider a no-skip, bottleneck-only linear
autoencoder that maps arbitrary inputs in
$C^{s_0}(\mathcal X;\mathbb R^{d_0})$ through a bottleneck
$C^{s_L}(\mathcal X;\mathbb R^{d_L})$ and reconstructs them at rank $s_0$.
A necessary condition for exact reconstruction of arbitrary rank-$s_0$ cochains
is $d_L\geq d_0/\rho_{\mathrm{bot}}$.
\end{proposition}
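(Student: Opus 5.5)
The first claim is definitional, so I would dispose of it quickly and spend the argument on the capacity bound. The counts $n_r=|\mathcal X^r|$ depend only on $\mathcal X$ and $\rk$. Hence $\rho_i=n_{s_{i+1}}/n_{s_i}$ and $\rho_{\mathrm{bot}}=n_{s_L}/n_{s_0}$ are functions of $(\mathcal X,\mathcal S)$ alone. Each $n_{s_i}$ is positive because every $s_i\in\mathcal R(\mathcal X)$ is active, so the ratios are well defined. No learned parameter enters. I would remark that this contrasts with pooling ratios in classical U-Nets, but it needs no further argument.

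For the capacity bound I would use a dimension-counting (rank) argument. A no-skip, bottleneck-only linear autoencoder is a composition $\mathcal D\circ\mathcal E$. Here $\mathcal E: C^{s_0}(\mathcal X;\mathbb R^{d_0})\to C^{s_L}(\mathcal X;\mathbb R^{d_L})$ is linear: it is the composition of the upward transports and refinements, each linear by hypothesis. Likewise $\mathcal D: C^{s_L}\to C^{s_0}$ is linear. The internal structure (incidence matrices, weights, intermediate ranks) is irrelevant; only the linearity of both maps and the dimensions of the spaces matter. Vectorizing, $C^{s_0}\cong\mathbb R^{n_{s_0}d_0}$ and $C^{s_L}\cong\mathbb R^{n_{s_L}d_L}$.

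Exact reconstruction of arbitrary rank-$s_0$ cochains means $\mathcal D\circ\mathcal E=\mathrm{id}$ on $C^{s_0}$. I would then invoke the rank inequality
\[
n_{s_0}d_0=\operatorname{rank}(\mathrm{id})=\operatorname{rank}(\mathcal D\mathcal E)\le\operatorname{rank}(\mathcal E)\le n_{s_L}d_L .
\]
Equivalently, $\mathcal E$ must be injective, so its domain dimension cannot exceed its codomain dimension. Dividing by $n_{s_L}>0$ gives $d_L\ge d_0\, n_{s_0}/n_{s_L}=d_0/\rho_{\mathrm{bot}}$.

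The main obstacle is interpretive rather than technical. I would need to pin down that "linear autoencoder" makes both encoder and decoder linear on the full cochain space, with no skip path carrying information around $C^{s_L}$. Otherwise the factorization through $C^{s_L}$ fails and the bound can be violated. I would therefore state this modeling assumption explicitly at the start.

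I would also note that linearity can be relaxed somewhat. For continuous maps, invariance of domain gives the same conclusion: an injective continuous map $\mathbb R^m\to\mathbb R^k$ requires $m\le k$. This shows the bound is structural, not an artifact of linearity.
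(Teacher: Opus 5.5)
Your proof is correct and matches the paper's: the ratios are fixed by the cell counts once $\mathcal X$ and $\mathcal S$ are chosen, and exact reconstruction forces the linear encoder into $C^{s_L}(\mathcal X;\mathbb R^{d_L})$ to be injective, so $n_{s_L}d_L\ge n_{s_0}d_0$, and dividing by $n_{s_L}$ gives the bound. Your extra remarks are correct refinements the paper does not state: $n_{s_i}>0$ for active ranks makes the ratios well defined, and the explicit modeling assumption plus the invariance-of-domain extension to continuous maps clarify the scope.
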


The role of the capacity condition in Proposition~\ref{prop:compression-hierarchy}
is to identify when a bottleneck is structurally severe. If
$\rho_{\mathrm{bot}}\ll 1$, a no-skip model must either use a much larger
bottleneck feature dimension or discard information that may be needed at the
output rank. Skip connections provide an alternative by bypassing the
compressed ranks and merging encoder features back into the decoder at matched
supports.

\paragraph{Examples.}
For the Texas graph, the triangle bottleneck path $0<1<2$ has
$n_0=183$, $n_1=325$, and $n_2=52$, giving
$\rho_0=325/183\approx1.78$, $\rho_1=52/325\approx0.16$, and
$\rho_{\mathrm{bot}}=52/183\approx0.28$. If the path is extended to a global
rank-$3$ cell, $0<1<2<3$, then $n_3=1$ and
$\rho_{\mathrm{bot}}=1/183\approx0.006$. Thus the global bottleneck is much
more compressive than the triangle bottleneck. By contrast, a $28\times28$
image grid represented by pixels, grid adjacencies, and $2\times2$ patches has
$n_0=784$, $n_1=1512$, and $n_2=729$, so the path $0<1<2$ has
$\rho_{\mathrm{bot}}=729/784\approx0.93$. In this case the bottleneck retains
nearly the same number of cells as the input rank, so reconstruction without
skip connections is structurally less constrained.

\subsection{Canonical instantiation}
\label{subsec:canonical}

The general definition allows different choices of transport, refinement, and
merge maps. The canonical TopoU-Net used in our experiments takes cross-rank
transport to be incidence convolution along the selected encoder rank path
$\mathcal S=(s_0<\ldots<s_L)$. For each consecutive pair of ranks in the path,
$(s_i,s_{i+1})$, let $\bar B_{s_i,s_{i+1}}$ denote either the raw incidence
matrix $B_{s_i,s_{i+1}}$ or a fixed degree-normalized incidence matrix with the
same support. The upward and downward transports are
\[
T^{\uparrow}_{s_i\to s_{i+1}}(H)
=
\phi \left(\bar B_{s_i,s_{i+1}}^\top H W_i^{\uparrow}\right),
~
T^{\downarrow}_{s_{i+1}\to s_i}(G)
=
\phi \left(\bar B_{s_i,s_{i+1}}G W_i^{\downarrow}\right),
\]
where $W_i^{\uparrow}\in\mathbb R^{d_i\times d_{i+1}}$ and
$W_i^{\downarrow}\in\mathbb R^{d_{i+1}\times d_i}$. If $s_{i+1}-s_i>1$, the
same formula uses the direct incidence matrix $B_{s_i,s_{i+1}}$. Within-rank refinements $\Phi_{s_i}$ and $\Psi_{s_i}$ are applied after
transport and do not change the rank or support. They may be implemented as
pointwise MLPs, or as same-rank message-passing blocks using incidence-induced
adjacencies. For active ranks $r<q$, the operator
$A_{r\mid q}=B_{r,q}B_{r,q}^{\top}$ connects $r$-cells that are incident to a
common $q$-cell. For $q<r$, the operator
$A_{r\mid q}=B_{q,r}^{\top}B_{q,r}$ gives the analogous adjacency on
$r$-cells through shared lower-rank cells. These refinements update features
within a fixed cochain space. The hierarchy of the architecture is determined
by the cross-rank incidence transports.

For example, if $\mathcal S=(0<2<3)$, the encoder transports features as
$E_0 \xrightarrow{\bar B_{0,2}^{\top}} E_2
\xrightarrow{\bar B_{2,3}^{\top}} E_3$. The decoder starts from
$D_3=\Omega_3(E_3)$ and reverses the path as
\[
D_3
\xrightarrow{\bar B_{2,3}}
\widetilde D_2
\xrightarrow{M_2(E_2,\cdot)}
D_2
\xrightarrow{\bar B_{0,2}}
\widetilde D_0
\xrightarrow{M_0(E_0,\cdot)}
D_0 .
\]
The skip merges occur only at matched ranks, here ranks $2$ and $0$, where the
encoder and decoder states are indexed by the same cells.

The merge operator must act within a common cochain space. In the default
additive merge, the decoder state at rank $s_i$ is
$D_{s_i}=\phi((E_{s_i}+\widetilde D_{s_i})W_i^m)$, where
$W_i^m\in\mathbb R^{d_i\times d_i}$ is shared across cells of rank $s_i$.
Other equivariant within-rank merges, such as concatenation or gated fusion,
can be used without changing the definition. In no-skip ablations, the merge is
removed and the decoder uses $D_{s_i}=\widetilde D_{s_i}$. This isolates the
effect of bypassing compressed bottleneck ranks.

\section{Experimental study}
\label{sec:experiments}

We evaluate TopoU-Net on node classification, graph classification, hypergraph
node classification, 3D point-cloud classification, and image segmentation.
Image reconstruction results are reported in Appendix~\ref{app:image-reconstruction}.

\paragraph{Rank structure construction.}
Each domain is represented as a combinatorial complex with domain-specific rank
assignment. Graphs use TopoNetX~\cite{hajij2024topox} to construct nodes
(rank~0), edges (rank~1), and triangles as maximal 3-cliques (rank~2). For
heterophilic graph datasets, we additionally use a single global rank-3 cell
incident to all rank-2 triangle cells. This is the construction used in the
compression tables, where $n_3=1$. Hypergraphs use native node--hyperedge
incidence as the $0\to1$ transport operator
$H\in\{0,1\}^{|V|\times|\mathcal E|}$, with rank~2 cells formed from triples
of hyperedges satisfying a fixed pairwise-overlap criterion. For point-cloud
classification on ModelNet10/40~\cite{wu20153d}, each shape is sampled into
512 points. Sampled points are treated as rank~0 cells, symmetric kNN
adjacencies as rank~1 cells, and local kNN clique triangles as inferred rank~2
cells.

\paragraph{Rank path selection.}
Rank paths are selected based on dataset structure rather than using a single
architecture across all datasets. For homophilic graphs (Computers, Photo), the
path $0\to1\to0$ is used because label smoothness can be captured through edge
neighborhoods. For heterophilic graphs (Cornell, Wisconsin, Texas, Actor,
Chameleon, Squirrel), the path
$0\to1\to2\to3\to2\to1\to0$ introduces triangle and global aggregation channels
in addition to edge neighborhoods. Graph classification, hypergraph node
classification, and point-cloud classification use the path
$0\to1\to2\to1\to0$. The relevant structural quantity is the bottleneck support
ratio $\rho_{\mathrm{bot}}$ from Proposition~\ref{prop:compression-hierarchy},
analyzed in Section~\ref{sec:compression-analysis}.

\subsection{Node classification}

Table~\ref{tab:node_cls} reports accuracy across eight benchmarks compared
against GCN~\cite{kipf2017semi}, GraphSAGE~\cite{hamilton2017inductive},
GAT~\cite{velickovic2018graph}, GIN~\cite{xu2019powerful},
H2GCN~\cite{zhu2020beyond}, MixHop~\cite{abu2019mixhop},
DiffPool~\cite{ying2018diffpool}, and Graph U-Net~\cite{gao2019graph}.
TopoU-Net achieves the highest mean accuracy on six of eight datasets. The
largest gains occur on heterophilic benchmarks: $+4.9$ percentage points on
Cornell, $+5.7$ on Wisconsin, and $+9.8$ on Texas over the next strongest
baseline in each case.

These results are consistent with the mechanism described in
Section~\ref{subsec:bottleneck-support}. On heterophilic graphs, immediate edge
neighborhoods can be less reliable for label prediction, so aggregation through
triangles and the global rank-3 cell can provide useful additional context. On
the homophilic co-purchase graphs, Computers and Photo, GraphSAGE remains the
strongest evaluated baseline. This is consistent with strong local homophily,
where edge-neighborhood aggregation is already effective and higher-order rank
paths are less important.

\begin{table*}[h]
\centering
\caption{Test accuracy (\%) across node classification benchmarks.}
\label{tab:node_cls}
\resizebox{1\textwidth}{!}{%
\begin{tabular}{lcccccccc}
\hline
\textbf{Model} 
& \textbf{Computers} & \textbf{Photo} & \textbf{Actor} 
& \textbf{Chameleon} & \textbf{Squirrel} & \textbf{Cornell}
& \textbf{Wisconsin} & \textbf{Texas} \\
\hline
GCN & $66.4 \pm 4.0$ & $77.2 \pm 4.9$ & $33.1 \pm 0.9$
& $41.1 \pm 2.6$ & $28.1 \pm 0.9$ & $47.8 \pm 9.3$
& $62.1 \pm 7.3$ & $62.9 \pm 4.3$ \\
GraphSAGE & $\mathbf{80.4 \pm 0.8}$ & $\mathbf{89.1 \pm 1.5}$ 
& $33.7 \pm 0.8$ & $47.3 \pm 2.8$ & $31.7 \pm 1.7$ & $70.5 \pm 5.1$
& $82.7 \pm 3.1$ & $81.0 \pm 5.8$ \\
GAT & $74.4 \pm 2.3$ & $85.7 \pm 2.5$ & $29.3 \pm 1.0$
& $44.8 \pm 2.8$ & $26.1 \pm 0.5$ & $46.4 \pm 8.5$
& $61.1 \pm 6.3$ & $59.4 \pm 4.5$ \\
GIN & $73.6 \pm 2.9$ & $81.8 \pm 3.7$ & $23.7 \pm 1.6$
& $48.2 \pm 2.4$ & $31.5 \pm 1.9$ & $52.9 \pm 7.1$
& $62.5 \pm 3.3$ & $64.3 \pm 7.9$ \\
H2GCN & $46.0 \pm 2.2$ & $75.2 \pm 2.0$ & $30.0 \pm 1.9$
& $37.7 \pm 3.2$ & $24.4 \pm 1.6$ & $72.4 \pm 6.9$
& $81.7 \pm 3.2$ & $79.1 \pm 6.2$\\
MixHop & $67.6 \pm 2.4$ & $78.1 \pm 4.0$ & $29.8 \pm 2.0$
& $36.4 \pm 1.7$ & $23.2 \pm 1.9$ & $71.3 \pm 2.4$
& $83.1 \pm 3.0$ & $75.1 \pm 5.9$\\
DiffPool & $57.7 \pm 12$ & $82.4 \pm 4.8$ & $25.0 \pm 2.2$
& $49.3 \pm 2.6$ & $33.7 \pm 1.4$ & $48.6 \pm 6.9$
& $58.6 \pm 5.0$ & $59.1 \pm 5.3$\\
Graph U-Net & $71.2 \pm 1.0$ & $54.4 \pm 6.6$ & $35.9 \pm 1.0$
& $41.3 \pm 1.4$ & $33.5 \pm1.6$ & $46.7 \pm 6.0$
& $60.3 \pm 6.1$ & $60.0 \pm 6.9$  \\
\hline 
TopoU-Net & $73.6 \pm 1.9$ & $83.2 \pm 1.1$ & $\mathbf{37.5 \pm 0.9}$
& $\mathbf{49.6 \pm 1.7}$ & $\mathbf{35.1 \pm 1.7}$ 
& $\mathbf{77.3 \pm 4.7}$ & $\mathbf{88.8 \pm 5.6}$ 
& $\mathbf{90.8 \pm 4.0}$ \\
\hline
\end{tabular}}
\end{table*}

\subsection{Graph classification}

Table~\ref{tab:graph_cls} reports graph classification accuracy on MUTAG,
PROTEINS, and IMDB-BINARY using the rank path
$0\!\to\!1\!\to\!2\!\to\!1\!\to\!0$ with global mean pooling. TopoU-Net
achieves the highest mean accuracy on MUTAG and IMDB-BINARY and remains competitive
on PROTEINS.

\begin{wraptable}[8]{r}{0.5\textwidth}
\centering
\caption{Test@BestVal accuracy (\%) on graph classification.}
\label{tab:graph_cls}
\footnotesize
\setlength{\tabcolsep}{4pt}
\renewcommand{\arraystretch}{0.78}
\begin{tabular}{lccc}
\toprule
\textbf{Model} & \textbf{MUTAG} & \textbf{PROTEINS} & \textbf{IMDB} \\
\midrule
GCN         & $84.5{\pm}7.6$ & $74.5{\pm}5.5$ & $48.8{\pm}4.1$ \\
GraphSAGE   & $82.9{\pm}8.1$ & $74.2{\pm}5.2$ & $49.8{\pm}4.8$ \\
GAT         & $71.8{\pm}3.7$ & $\mathbf{74.7{\pm}4.0}$ & $48.3{\pm}3.1$ \\
GIN         & $91.4{\pm}5.1$ & $73.4{\pm}3.9$ & $69.2{\pm}3.7$ \\
Graph U-Net & $82.9{\pm}9.5$ & $\mathbf{74.7{\pm}3.9}$ & $50.5{\pm}3.6$ \\
\midrule
TopoU-Net   & $\mathbf{91.5{\pm}7.9}$ & $73.4{\pm}4.2$ & $\mathbf{71.1{\pm}3.3}$ \\
\bottomrule
\end{tabular}
\end{wraptable}

These results suggest that rank-2 structure is useful for
graph-level prediction in domains with higher-order substructure, such as
molecular graphs and interaction networks.

\subsection{Hypergraph node classification}

Given a hypergraph $\mathcal H=(V,\mathcal E)$ with incidence matrix
$H\in\{0,1\}^{|V|\times|\mathcal E|}$, we use $H$ as the transport operator
between ranks 0 and 1. Rank-2 cells are constructed from triples of hyperedges
satisfying the fixed overlap criterion, giving incidence matrix $B_{1,2}$ and
the path $0\to1\to2\to1\to0$ with skip connections at ranks 0 and 1.
Table~\ref{tab:hypergraph_results} reports accuracy on co-citation and
co-authorship hypergraph benchmarks~\cite{yadati2019hypergcn}, implemented
using the DHG library~\cite{feng2019hypergraph}. TopoU-Net attains the highest
mean accuracy on four of five datasets compared with MLP,
HGNN~\cite{feng2019hypergraph,gao2022hgnnplus},
HyperSAGE~\cite{arya2020hypersage}, and HNHN~\cite{dong2020hnhn}.

The MLP is competitive on several co-citation datasets, suggesting that node
features are already informative. TopoU-Net improves over the MLP on all
reported hypergraph datasets, indicating that the rank path can add useful
higher-order signal. The largest gains occur on co-authorship datasets, where
hyperedges directly encode group interactions.

\begin{table}[htbp]
\centering
\caption{Hypergraph node classification accuracy (\%).}
\label{tab:hypergraph_results}
\small
\resizebox{\linewidth}{!}{%
\begin{tabular}{lccccc}
\toprule
\textbf{Model} & \textbf{CocitationCora} & \textbf{CocitationCiteseer} 
& \textbf{CocitationPubmed} & \textbf{CoauthorshipCora} 
& \textbf{CoauthorshipDBLP} \\
\midrule
MLP & $46.15 \pm 1.93$ & $47.65 \pm 0.94$ & $36.13 \pm 8.59$ 
& $46.15 \pm 1.93$ & $78.55 \pm 0.23$ \\
HGNN & $42.79 \pm 1.01$ & $35.83 \pm 0.70$ & $22.32 \pm 2.14$ 
& $59.36 \pm 1.20$ & $87.65 \pm 0.11$ \\
HyperSAGE & $44.58 \pm 0.72$ & $44.44 \pm 0.91$ 
& $\mathbf{39.83 \pm 0.31}$ & $47.10 \pm 0.47$ & $82.64 \pm 0.34$ \\
HNHN & $47.94 \pm 1.79$ & $45.72 \pm 0.64$ & $36.81 \pm 1.38$ 
& $55.97 \pm 1.82$ & $86.12 \pm 0.46$ \\
\midrule
TopoU-Net & $\mathbf{51.63 \pm 1.80}$ & $\mathbf{48.73 \pm 1.28}$ 
& $39.67 \pm 0.37$ & $\mathbf{63.57 \pm 1.62}$ 
& $\mathbf{87.81 \pm 0.16}$ \\
\bottomrule
\end{tabular}}
\end{table}

\subsection{Point-cloud classification}

\begin{wraptable}{r}{0.49\textwidth}
\vspace{-1.5em}
\centering
\caption{Point-cloud classification accuracy.}
\label{tab:pointcloud_cls}
\small
\resizebox{0.40\textwidth}{!}{%
\begin{tabular}{lcc}
\toprule
\textbf{Model} & \textbf{ModelNet10} & \textbf{ModelNet40} \\
\midrule
PointNet  & $0.8893{\pm}0.0008$ & $0.8258{\pm}0.0006$ \\
GraphSAGE & $\mathbf{0.8959{\pm}0.0008}$ & $0.8179{\pm}0.0066$ \\
TopoU-Net & $0.8855{\pm}0.0062$ & $\mathbf{0.8318{\pm}0.0074}$ \\
\bottomrule
\end{tabular}}
\vspace{-1.0em}
\end{wraptable}

We evaluate TopoU-Net on ModelNet10/40~\cite{wu20153d} in a point-cloud
setting. PointNet operates directly on the point set, GraphSAGE uses a kNN
graph, and TopoU-Net lifts the point cloud to a rank-$0/1/2$ complex with
sampled points as $0$-cells, symmetric kNN edges as $1$-cells, and local kNN
clique triangles as inferred $2$-cells. Using the path
$0\to1\to2\to1\to0$, TopoU-Net achieves the highest mean accuracy on
ModelNet40 and remains competitive on ModelNet10.

\subsection{Image segmentation}
\label{subsec:image-segmentation}

For image segmentation, we evaluate a learned-transport variant inspired by
TopoU-Net. In this variant, the rank-transport matrices are learned soft
incidence maps rather than fixed incidence matrices from an explicit
combinatorial complex. Given pixel features, the model transports information
from pixels to latent topological slots and back. We report two variants:
TopoU-Net Soft, a small learned-transport model without U-Net skips, and
TopoU-Net-S2, a compact skip-connected multi-kernel version. Results are shown
on Oxford-IIIT Pet~\cite{parkhi2012cats} and Pascal VOC
2012~\cite{everingham2010pascal}.

\begin{table}[h]
\centering
\caption{Segmentation accuracy averaged across runs.}
\label{tab:image-segmentation-accuracy}
\small
\begin{tabular}{lccc}
\toprule
\textbf{Model} & \textbf{Params} & \textbf{Oxford-IIIT Pet Acc.} & \textbf{Pascal VOC 2012 Acc.} \\
\midrule
U-Net & 1.93M & $0.8508 \pm 0.0030$ & $0.7440 \pm 0.0002$ \\
ResUNet & 2.88M & $0.8505 \pm 0.0005$ & $0.7421 \pm 0.0002$ \\
Attention U-Net & 1.95M & $\mathbf{0.8545 \pm 0.0025}$ & $0.7449 \pm 0.0021$ \\
DeepLab-lite & 2.06M & $\mathbf{0.8545 \pm 0.0018}$ & $0.7421 \pm 0.0041$ \\
\midrule
TopoU-Net Soft & 0.42M & $0.8440 \pm 0.0023$ & $0.7360 \pm 0.0018$ \\
TopoU-Net-S2 & 0.97M & $0.8449 \pm 0.0028$ & $\mathbf{0.7453 \pm 0.0019}$ \\
\bottomrule
\end{tabular}
\end{table}

The learned-transport variants give competitive segmentation accuracy in a
smaller parameter regime. TopoU-Net Soft uses $0.42$M parameters and remains
close to the classical baselines on Oxford-IIIT Pet. TopoU-Net-S2 improves over
the soft version while using roughly half the parameters of U-Net, and obtains
the highest Pascal VOC accuracy among the reported models.

\subsection{Ablation studies}

We study how bottleneck support, rank-path complexity, and skip connections
affect performance.

\subsubsection{Compression profiles}
\label{sec:compression-analysis}

Table~\ref{tab:compression-profile} reports support profiles along the encoder
rank paths. We use $\rho_i=n_{s_{i+1}}/n_{s_i}$ for consecutive encoder steps
and $\rho_{\mathrm{bot}}=n_{s_L}/n_{s_0}$ for the bottleneck support ratio.
Ratios below one indicate support compression; ratios above one indicate
support expansion.

\begin{table*}[h]
\centering
\caption{Support profiles along encoder rank paths. The full U-shaped decoder
path reverses the encoder path shown here.}
\label{tab:compression-profile}
\resizebox{\textwidth}{!}{%
\begin{tabular}{lccccccccl}
\hline
\textbf{Dataset} & $n_0$ & $n_1$ & $n_2$ & $n_3$ & $\rho_0$ & $\rho_1$ 
& $\rho_2$ & $\rho_{\text{total}}$ & \textbf{Path} \\
\hline
\multicolumn{10}{l}{\emph{Heterophilic}} \\
Texas & 183 & 325 & 52 & 1 & 1.78 & 0.16 & 0.02 & 0.006 
& $0\!\to\!1\!\to\!2\!\to\!3\!\to\!2\!\to\!1\!\to\!0$ \\
Wisconsin & 251 & 515 & 89 & 1 & 2.05 & 0.17 & 0.01 & 0.004 
& $0\!\to\!1\!\to\!2\!\to\!3\!\to\!2\!\to\!1\!\to\!0$ \\
Cornell & 183 & 298 & 47 & 1 & 1.63 & 0.16 & 0.02 & 0.005 
& $0\!\to\!1\!\to\!2\!\to\!3\!\to\!2\!\to\!1\!\to\!0$ \\
Chameleon & 2277 & 36101 & 1507 & 1 & 15.9 & 0.04 & 0.001 & 0.0004 
& $0\!\to\!1\!\to\!2\!\to\!3\!\to\!2\!\to\!1\!\to\!0$ \\
\hline
\multicolumn{10}{l}{\emph{Homophilic}} \\
Computers & 13752 & 245861 & 8943 & --- & 17.9 & 0.04 & --- & 0.65 
& $0\!\to\!1\!\to\!0$ \\
Photo & 7650 & 119081 & 4521 & --- & 15.6 & 0.04 & --- & 0.59 
& $0\!\to\!1\!\to\!0$ \\
\hline
\multicolumn{10}{l}{\emph{Grid}} \\
$28\times 28$ & 784 & 1512 & 729 & --- & 1.93 & 0.48 & --- & 0.93 
& $0\!\to\!1\!\to\!2\!\to\!1\!\to\!0$ \\
\hline
\end{tabular}}
\end{table*}

The heterophilic paths ending in a global rank-3 cell have
$\rho_{\mathrm{bot}}<0.01$, so the bottleneck support is extremely compressed.
The grid path has $\rho_{\mathrm{bot}}\approx0.93$, so the bottleneck retains
nearly the same number of cells as the input rank. The homophilic node-edge
paths are support-expanding at the bottleneck; they should not be described as
moderate-compression paths.

\subsubsection{Rank path complexity}

Table~\ref{tab:rank-path-complexity} varies rank-path complexity on
heterophilic datasets. Increasing the path from $0\to1\to0$ to
$0\to1\to2\to3\to2\to1\to0$ improves performance, with gains up to $26.3$
percentage points on Texas. This comparison is not parameter-matched: adding
higher ranks also increases depth and parameter count. The result should
therefore be interpreted as evidence for the practical value of richer rank
paths, not as an isolated proof that topology alone explains the gains.

\begin{table*}[h]
\centering
\caption{Node accuracy (\%) versus rank-path complexity on heterophilic datasets.}
\label{tab:rank-path-complexity}
\resizebox{\textwidth}{!}{%
\begin{tabular}{llcccccccc}
\hline
\textbf{Path} & \textbf{Cells} & $\rho_{\text{total}}$ 
& \textbf{Params} & \textbf{Actor} & \textbf{Chameleon} 
& \textbf{Squirrel} & \textbf{Cornell} & \textbf{Wisconsin} 
& \textbf{Texas} \\
\hline
$0\!\to\!1\!\to\!0$ & Nodes, edges & 1.0 & 15K 
& $29.7{\pm}2.6$ & $32.9{\pm}4.0$ & $25.6{\pm}1.3$ 
& $53.2{\pm}8.6$ & $71.3{\pm}4.4$ & $64.5{\pm}6.9$ \\
$0\!\to\!1\!\to\!2\!\to\!1\!\to\!0$ & +Triangles & 0.28 & 28K 
& $34.9{\pm}5.7$ & $45.5{\pm}2.1$ & $33.1{\pm}2.2$ 
& $56.2{\pm}7.8$ & $73.3{\pm}4.3$ & $69.1{\pm}7.0$ \\
$0\!\to\!1\!\to\!2\!\to\!3\!\to\!2\!\to\!1\!\to\!0$ & +Global & 0.006 & 45K 
& $\mathbf{37.5{\pm}0.9}$ & $\mathbf{49.6{\pm}1.7}$ 
& $\mathbf{35.1{\pm}1.7}$ & $\mathbf{77.3{\pm}4.7}$ 
& $\mathbf{88.8{\pm}5.6}$ & $\mathbf{90.8{\pm}4.0}$ \\
\hline
\end{tabular}}
\end{table*}

\subsubsection{Skip connections}
\label{sec:skip-ablation}

Table~\ref{tab:skip-ablation} tests the prediction of
Proposition~\ref{prop:compression-hierarchy}: when the bottleneck support is
severely compressed, removing skip connections should be especially damaging.
This trend is visible on Texas, Wisconsin, and Cornell, where
$\rho_{\mathrm{bot}}<0.01$ and removing skips reduces accuracy by $15$--$24$
percentage points. The node-edge homophilic paths are support-expanding rather
than compressive, and show smaller degradation. For the grid reconstruction
row, the metric is MSE rather than accuracy, so the reported effect is a
relative error increase.

\begin{table*}[h]
\centering
\caption{Effect of removing skip connections. For accuracy rows, $\Delta$ is
the no-skip result minus the with-skip result in percentage points. For the MSE
row, $\Delta$ is the relative MSE increase.}
\label{tab:skip-ablation}
\resizebox{0.85\textwidth}{!}{%
\begin{tabular}{llcccc}
\hline
\textbf{Dataset} & \textbf{Path} & $\rho_{\text{total}}$ 
& \textbf{With Skip} & \textbf{No Skip} & $\Delta$ \\
\hline
\multicolumn{6}{l}{\emph{Severe compression}} \\
Texas & $0\!\to\!1\!\to\!2\!\to\!3\!\to\!2\!\to\!1\!\to\!0$ & 0.006 
& $90.8{\pm}4.0$ & $67.3{\pm}6.2$ & $-23.5$ \\
Wisconsin & $0\!\to\!1\!\to\!2\!\to\!3\!\to\!2\!\to\!1\!\to\!0$ & 0.004 
& $88.8{\pm}5.6$ & $71.1{\pm}5.8$ & $-17.7$ \\
Cornell & $0\!\to\!1\!\to\!2\!\to\!3\!\to\!2\!\to\!1\!\to\!0$ & 0.005 
& $77.3{\pm}4.7$ & $61.4{\pm}7.1$ & $-15.9$ \\
\hline
\multicolumn{6}{l}{\emph{Moderate compression}} \\
Computers & $0\!\to\!1\!\to\!0$ & 0.65 
& $73.6{\pm}1.9$ & $68.2{\pm}2.4$ & $-5.4$ \\
Photo & $0\!\to\!1\!\to\!0$ & 0.59 
& $83.2{\pm}1.1$ & $79.1{\pm}1.8$ & $-4.1$ \\
\hline
\multicolumn{6}{l}{\emph{Minimal compression}} \\
Grid $28\times 28$ & $0\!\to\!1\!\to\!2\!\to\!1\!\to\!0$ & 0.93 
& $9.5\times 10^{-4}$ & $1.2\times 10^{-3}$ & $-26\%$ \\
\hline
\end{tabular}}
\end{table*}
\subsubsection{Minimal rank-path baseline}

We compare the dataset-specific TopoU-Net to a minimal node-edge path
$0\!\to\!1\!\to\!0$. This tests whether performance comes simply from using
incidence-based encoder--decoder structure, or from selecting appropriate
higher-order rank paths. Table~\ref{tab:minimal_vs_topounet} shows that dataset-specific paths improve
all datasets, with the largest gains on heterophilic graphs. This supports the
view that rank-path selection is a central part of the TopoU-Net design, not
just an implementation detail.
\begin{table*}[h]
\centering
\caption{Minimal node-edge baseline versus dataset-specific TopoU-Net.}
\label{tab:minimal_vs_topounet}
\resizebox{\textwidth}{!}{%
\begin{tabular}{lcccccccc}
\hline
\textbf{Model} & \textbf{Computers} & \textbf{Photo} & \textbf{Actor} 
& \textbf{Chameleon} & \textbf{Squirrel} & \textbf{Cornell} 
& \textbf{Wisconsin} & \textbf{Texas} \\
\hline
Minimal $0\!\to\!1\!\to\!0$
& $73.6{\pm}1.9$ & $83.2{\pm}1.1$ & $29.7{\pm}2.6$ 
& $32.9{\pm}4.0$ & $25.6{\pm}1.3$ & $53.2{\pm}8.6$ 
& $71.3{\pm}4.7$ & $64.5{\pm}6.9$ \\
Dataset-specific
& $\mathbf{73.6{\pm}1.9}$ & $\mathbf{83.2{\pm}1.1}$ 
& $\mathbf{37.5{\pm}0.9}$ & $\mathbf{45.5{\pm}2.1}$ 
& $\mathbf{35.1{\pm}1.7}$ & $\mathbf{77.3{\pm}4.7}$ 
& $\mathbf{88.8{\pm}5.6}$ & $\mathbf{90.8{\pm}4.0}$ \\
\hline
\end{tabular}}
\end{table*}

\section{Conclusion}
\label{sec:conclusion}
We introduced TopoU-Net, a U-Net-style encoder-decoder for combinatorial
complexes. Ranked cells define hierarchy levels, incidence maps provide
cross-rank transport, and skip connections merge encoder and decoder features
at matched ranks. This gives a common formulation for graphs, hypergraphs,
meshes, and grids, while making rank-path compression explicit through the
bottleneck support ratio. Experiments and ablations support the proposed view:
higher-order rank paths help when pairwise neighborhoods are insufficient, and
skip connections become increasingly important under severe bottleneck
compression. The main limitation is that rank paths and higher-order cells are currently
chosen by hand. This improves interpretability but makes performance dependent
on preprocessing and task-specific choices. Scalability is another limitation:
constructing and storing triangles, hyperedge overlaps, or global cells can be
expensive on large domains. Future work should learn or select rank paths
automatically and develop sparse or sampled higher-order constructions.




\bibliographystyle{plain}   


\appendix
\section{Proofs and additional theoretical elaborations}

\subsection{Additional rank-transport parameterizations}
\label{app:transport-parameterizations}

This appendix lists several transport parameterizations compatible with the
rank-transport definition in Section~\ref{subsec:rank-transport}. The important
constraint is that any attention or gating weights must be computed by shared
functions from equivariant cell features, or be tied across incidence types.
Arbitrary cell-indexed or incidence-indexed parameters are generally
not permutation-equivariant.

For $y\in\mathcal X^{r'}$, define the incident lower-rank neighborhood
$\mathcal I_r(y) := \{x\in\mathcal X^r:x\subset y\}$. The downward variants
are obtained by reversing the incidence direction.

\begin{table}[h]
\centering
\small
\begin{tabular}{lll}
\toprule
\textbf{Parameterization}
& \textbf{Upward update for } $y\in\mathcal X^{r'}$
& \textbf{Condition} \\
\midrule
Incidence convolution
&
$\displaystyle
\phi\!\left(\sum_{x\in\mathcal I_r(y)} H_x W\right)
$
&
Shared $W$
\\[1.2em]

Degree-normalized incidence
&
$\displaystyle
\phi\!\left(\sum_{x\in\mathcal I_r(y)} \bar B_{xy} H_x W\right)
$
&
Fixed incidence weights $\bar B_{xy}$
\\[1.2em]

Attention-weighted incidence
&
$\displaystyle
\phi\!\left(\sum_{x\in\mathcal I_r(y)} \alpha_{xy} H_x W\right)
$
&
$\alpha_{xy}$ from shared equivariant scoring
\\[1.2em]

Gated incidence
&
$\displaystyle
\phi\!\left(\sum_{x\in\mathcal I_r(y)} g_{xy} H_x W\right)
$
&
$g_{xy}$ from shared equivariant gating
\\
\bottomrule
\end{tabular}
\caption{Representative rank-transport parameterizations. Here $H_x$ is the
feature vector on the source cell $x$, $W$ is shared across cells, and
$\bar B$ denotes a fixed raw or normalized incidence matrix. Attention and
gating preserve equivariance only when their weights are produced by shared
functions of equivariant features, not by arbitrary cell-indexed parameters.}
\label{tab:transport-appendix}
\end{table}

\subsection{Proofs for Section~\ref{sec:topounet}}
\label{app:topounet-proofs}

\begin{proof}[Proof of Proposition~\ref{thm:compatibility}]
The claim follows by induction over the encoder and decoder recursions. The
input satisfies $E_{s_0}\in C^{s_0}(\mathcal X;\mathbb R^{d_0})$ by definition.
If $E_{s_i}\in C^{s_i}(\mathcal X;\mathbb R^{d_i})$, then
$T^{\uparrow}_{s_i\to s_{i+1}}(E_{s_i})$ lies in
$C^{s_{i+1}}(\mathcal X;\mathbb R^{d_{i+1}})$, and the within-rank map
$\Phi_{s_{i+1}}$ preserves this cochain space. Hence
$E_{s_{i+1}}\in C^{s_{i+1}}(\mathcal X;\mathbb R^{d_{i+1}})$.

For the decoder, $D_{s_L}=\Omega_{s_L}(E_{s_L})$ lies in
$C^{s_L}(\mathcal X;\mathbb R^{d_L})$. If
$D_{s_{i+1}}\in C^{s_{i+1}}(\mathcal X;\mathbb R^{d_{i+1}})$, then
$T^{\downarrow}_{s_{i+1}\to s_i}(D_{s_{i+1}})$ lies in
$C^{s_i}(\mathcal X;\mathbb R^{d_i})$, and $\Psi_{s_i}$ preserves that space.
Thus $\widetilde D_{s_i}\in C^{s_i}(\mathcal X;\mathbb R^{d_i})$. Since
$E_{s_i}$ lies in the same cochain space, the merge map $M_{s_i}$ is
well-defined and returns $D_{s_i}\in C^{s_i}(\mathcal X;\mathbb R^{d_i})$.
\end{proof}

\begin{proof}[Proof of Proposition~\ref{thm:equivariance}]
Let primed variables denote the states obtained after reindexing cells and
incidence matrices. We prove by induction that $E'_{s_i}=P_{s_i}E_{s_i}$ and
$D'_{s_i}=P_{s_i}D_{s_i}$ for all ranks in the path. The encoder claim holds
at $s_0$ by the definition of the reindexed input. If
$E'_{s_i}=P_{s_i}E_{s_i}$, reindexing-equivariance of the transport gives
$T'^{\uparrow}_{s_i\to s_{i+1}}(E'_{s_i})
=P_{s_{i+1}}T^{\uparrow}_{s_i\to s_{i+1}}(E_{s_i})$. Equivariance of
$\Phi_{s_{i+1}}$ then gives $E'_{s_{i+1}}=P_{s_{i+1}}E_{s_{i+1}}$.

At the bottleneck, equivariance of $\Omega_{s_L}$ gives
$D'_{s_L}=P_{s_L}D_{s_L}$. Assume $D'_{s_{i+1}}=P_{s_{i+1}}D_{s_{i+1}}$.
Reindexing-equivariance of the downward transport and equivariance of
$\Psi_{s_i}$ imply $\widetilde D'_{s_i}=P_{s_i}\widetilde D_{s_i}$. Since the
merge map is equivariant and $E'_{s_i}=P_{s_i}E_{s_i}$, we obtain
$D'_{s_i}=P_{s_i}D_{s_i}$. Taking $i=0$ proves the result.
\end{proof}

\begin{proof}[Proof of Proposition~\ref{prop:compression-hierarchy}]
The first claim follows directly from the definitions. Once the complex
$\mathcal X$ and rank path $\mathcal S$ are fixed, each support size
$n_{s_i}=|\mathcal X^{s_i}|$ is fixed, and hence so are
$\rho_i=n_{s_{i+1}}/n_{s_i}$ and
$\rho_{\mathrm{bot}}=n_{s_L}/n_{s_0}$.

For the capacity claim, a no-skip linear autoencoder whose output depends on
the input only through the bottleneck factors through a vector space of
dimension $n_{s_L}d_L$. Exact reconstruction of arbitrary inputs in
$C^{s_0}(\mathcal X;\mathbb R^{d_0})\cong\mathbb R^{n_{s_0}d_0}$ requires the
linear encoder into the bottleneck to be injective. A linear injective map from
$\mathbb R^{n_{s_0}d_0}$ to $\mathbb R^{n_{s_L}d_L}$ can exist only if
$n_{s_L}d_L\geq n_{s_0}d_0$. Dividing by $n_{s_L}$ gives
$d_L\geq d_0/\rho_{\mathrm{bot}}$.
\end{proof}

\section{Additional experimental results}

\subsection{Parameter efficiency}

Table~\ref{tab:param_efficiency} reports parameter counts for MNIST
reconstruction. TopoU-Net uses roughly $8.7$--$9.4\times$ fewer parameters than
the convolutional encoder-decoder baselines in this comparison, while also
achieving the lowest reported MSE.

\begin{table}[h]
\centering
\caption{Parameters and MNIST reconstruction MSE.}
\label{tab:param_efficiency}
\begin{tabular}{lcc}
\hline
\textbf{Model} & \textbf{Parameters} & \textbf{MSE} \\
\hline
UNet & $1.05 \times 10^5$ & $1.63 \times 10^{-3}$ \\
ResUNet & $1.08 \times 10^5$ & $1.58 \times 10^{-3}$ \\
AttentionUNet & $1.07 \times 10^5$ & $1.90 \times 10^{-3}$ \\
UNet++ & $1.14 \times 10^5$ & $1.53 \times 10^{-3}$ \\
\hline
TopoU-Net & $\mathbf{1.21 \times 10^4}$ & $\mathbf{9.5 \times 10^{-4}}$ \\
\hline
\end{tabular}
\end{table}

\subsection{Image reconstruction}
\label{app:image-reconstruction}

Image reconstruction is a useful stress test because pixel grids are the domain
where convolutional U-Nets are most natural. All inputs are resized to
$28\times 28$ grayscale. Table~\ref{tab:reconstruction_mse} reports test MSE.

TopoU-Net achieves the lowest MSE on four of five datasets and remains close to
the best baseline on FashionMNIST. This is consistent with the support-ratio
analysis: for the $28\times28$ grid construction in
Table~\ref{tab:compression-profile}, the bottleneck support ratio is
$\rho_{\mathrm{bot}}=0.93$, so the bottleneck retains nearly the same number of
cells as the input rank.

\begin{table}[h]
\centering
\caption{Test MSE on image reconstruction benchmarks.}
\label{tab:reconstruction_mse}
\resizebox{1\textwidth}{!}{%
\begin{tabular}{lccccc}
\hline
\textbf{Model} & \textbf{CIFAR10} & \textbf{EMNIST} 
& \textbf{FashionMNIST} & \textbf{MNIST} & \textbf{SVHN} \\
\hline
AttentionUNet & $2.90\times10^{-3}$ & $1.72\times10^{-3}$ 
& $5.96\times10^{-3}$ & $1.90\times10^{-3}$ & $8.04\times10^{-4}$ \\
ResUNet & $1.77\times10^{-3}$ & $1.42\times10^{-3}$ 
& $\mathbf{3.95\times10^{-3}}$ & $1.58\times10^{-3}$ & $4.82\times10^{-4}$ \\
UNet & $2.46\times10^{-3}$ & $1.46\times10^{-3}$ & $5.34\times10^{-3}$ 
& $1.63\times10^{-3}$ & $7.43\times10^{-4}$ \\
UNetPP & $2.06\times10^{-3}$ & $1.53\times10^{-3}$ & $6.07\times10^{-3}$ 
& $1.53\times10^{-3}$ & $5.65\times10^{-4}$ \\
\hline
Topo-UNet & $\mathbf{1.15\times10^{-3}}$ & $\mathbf{1.01\times10^{-3}}$ 
& $4.27\times10^{-3}$ & $\mathbf{9.46\times10^{-4}}$ 
& $\mathbf{2.63\times10^{-4}}$ \\
\hline
\end{tabular}}
\end{table}

\subsection{Full compression profile analysis}

Table~\ref{tab:full_compression_profile} extends
Table~\ref{tab:compression-profile} to all node-classification datasets. The
reported path is the encoder path; the decoder reverses it.

\begin{table*}[h]
\centering
\caption{Complete compression profiles across all node classification 
datasets.}
\label{tab:full_compression_profile}
\resizebox{\textwidth}{!}{%
\begin{tabular}{lccccccccl}
\hline
\textbf{Dataset} & $n_0$ & $n_1$ & $n_2$ & $n_3$ & $\rho_0$ & $\rho_1$ 
& $\rho_2$ & $\rho_{\text{total}}$ & \textbf{Path} \\
\hline
\multicolumn{10}{l}{\emph{Heterophilic}} \\
Texas & 183 & 325 & 52 & 1 & 1.78 & 0.16 & 0.02 & 0.006 
& $0\!\to\!1\!\to\!2\!\to\!3\!\to\!2\!\to\!1\!\to\!0$ \\
Wisconsin & 251 & 515 & 89 & 1 & 2.05 & 0.17 & 0.01 & 0.004 
& $0\!\to\!1\!\to\!2\!\to\!3\!\to\!2\!\to\!1\!\to\!0$ \\
Cornell & 183 & 298 & 47 & 1 & 1.63 & 0.16 & 0.02 & 0.005 
& $0\!\to\!1\!\to\!2\!\to\!3\!\to\!2\!\to\!1\!\to\!0$ \\
Actor & 7600 & 33544 & 1188 & 1 & 4.41 & 0.04 & 0.001 & 0.0001 
& $0\!\to\!1\!\to\!2\!\to\!3\!\to\!2\!\to\!1\!\to\!0$ \\
Chameleon & 2277 & 36101 & 1507 & 1 & 15.9 & 0.04 & 0.001 & 0.0004 
& $0\!\to\!1\!\to\!2\!\to\!3\!\to\!2\!\to\!1\!\to\!0$ \\
Squirrel & 5201 & 217073 & 2346 & 1 & 41.7 & 0.01 & 0.0004 & 0.0002 
& $0\!\to\!1\!\to\!2\!\to\!3\!\to\!2\!\to\!1\!\to\!0$ \\
\hline
\multicolumn{10}{l}{\emph{Homophilic}} \\
Computers & 13752 & 245861 & 8943 & --- & 17.9 & 0.04 & --- & 0.65 
& $0\!\to\!1\!\to\!0$ \\
Photo & 7650 & 119081 & 4521 & --- & 15.6 & 0.04 & --- & 0.59 
& $0\!\to\!1\!\to\!0$ \\
WikiCS & 11701 & 216123 & 9834 & --- & 18.5 & 0.05 & --- & 0.84 
& $0\!\to\!1\!\to\!0$ \\
\hline
\end{tabular}}
\end{table*}

\subsection{Hyperparameter settings}

All TopoU-Net models use hidden dimension $d_h = 64$ for encoder/decoder 
blocks, learning rate $\eta = 0.001$ with weight decay $10^{-5}$, and 
dropout rate 0.3. Graph classification uses batch size 32, node classification 
uses full-batch training. For heterophilic graphs, the rank-3 cell is a single global cell incident to
all rank-2 triangle cells. Its feature is initialized by mean pooling over the
triangle features. Baseline models (GCN, GAT, GIN, GraphSAGE, H2GCN, MixHop, DiffPool, Graph 
U-Net) use hyperparameters from original implementations, adapted to match 
our train/validation/test splits and normalization.

\end{document}